\documentclass[11pt]{article}
\usepackage[sf]{titlesec}
\usepackage{graphicx,wrapfig}
\usepackage{setspace}
\usepackage{booktabs}
\usepackage{amsmath}
\usepackage{comment}

\usepackage[ruled,vlined]{algorithm2e}

\usepackage{svg}
\usepackage{amsthm}
\usepackage{mathtools}

\usepackage[utf8]{inputenc}
\usepackage{tikz}

\usetikzlibrary{
   arrows.meta, 
   bending
}

\usepackage{hyperref}
\usepackage{scalerel,amssymb}
\usetikzlibrary{shapes,arrows}
\usepackage{graphics} 
\usepackage{epsfig} 
\usepackage{subcaption}
\usepackage[square,numbers]{natbib}
\usepackage{authblk}
\usepackage[margin=0.6in]{geometry}

\usepackage{float}

\newtheorem{prop}{Proposition}

\restylefloat{figure}

\providecommand*{\ped}[1]{\ensuremath{_\mathrm{#1}}}
\providecommand*{\ap}[1]{\ensuremath{^\mathrm{#1}}}

\usepackage{enumitem}
\setlist[itemize]{noitemsep}

\titlespacing\section{0pt}{12pt plus 2pt minus 2pt}{0pt plus 2pt minus 2pt}
\titlespacing\subsection{0pt}{12pt plus 2pt minus 2pt}{0pt plus 2pt minus 2pt}
\titlespacing\subsubsection{0pt}{12pt plus 2pt minus 2pt}{0pt plus 2pt minus 2pt}

\usepackage{scalefnt}
\usepackage{lipsum}

\providecommand*{\diff}%
   {\@ifnextchar^{\DIfF}{\DIfF^{}}}
\def\DIfF^#1{%
   \mathop{\mathrm{\mathstrut d}}%
      \nolimits^{#1}\gobblespace
}
\def\gobblespace{%
   \futurelet\diffarg\opspace}
\def\opspace{%
   \let\DiffSpace\!%
   \ifx\diffarg(%
      \let\DiffSpace\relax
   \else
      \ifx\diffarg\[%
         \let\DiffSpace\relax
      \else
         \ifx\diffarg\{%
            \let\DiffSpace\relax
         \fi\fi\fi\DiffSpace}
\makeatother

\title{Slot-hopping Enabled Loiter Guidance and Automation for Fixed-wing UAV Corridors\footnote{Copyright © 2026 by Pradeep J, Kedarisetty Siddhardha, and Ashwini Ratnoo. Published by the American Institute of Aeronautics and Astronautics, Inc., with permission.} \footnote{Presented in AIAA SCITECH 2026 Forum, Publised version DoI: https://doi.org/10.2514/6.2026-2368}}

\author{Pradeep J\footnote{B.Tech (Honours) Student, Department of Mechatronics Engineering;pradeepj0406@gmail.com} and Kedarisetty Siddhardha \footnote{Assistant Professor, Department of Mechatronics Engineering; siddhardhak@iitbhilai.ac.in}}
\affil{Indian Institute of Technology Bhilai, Bhilai, 491002, India}
\author{Ashwini Ratnoo\footnote{Professor, Department of Aerospace Engineering; ratnoo@iisc.ac.in. Associate Fellow AIAA.}}
\affil{Indian Institute of Science, Bengaluru, 560012, India}

\date{\vspace{-7ex}}

\begin{document}

\maketitle

\begin{abstract}
This paper addresses the problem of traffic congestion management in fixed-wing unmanned aerial vehicle (UAV) corridors by further developing a recently introduced loiter-lane framework.
A semi-cooperative guidance strategy is developed for inserting fixed-wing UAVs into a loiter lane with minimal disruption to the UAVs already operating within it, while enabling a more compact fixed-wing UAV corridor.
Building on the concepts of cooperative and non-disruptive loiter-lane insertion, the proposed strategy makes the incoming UAV first attempt, within its speed bounds, to rendezvous with an existing empty loiter slot. 
If direct insertion is infeasible, a minimal number of loitering UAVs perform coordinated slot hopping to create a suitably positioned empty slot.
The feasibility and performance of the method are demonstrated through numerical simulations.
\end{abstract}

\section{Introduction}
The rapid advancement in UAVs in the aspects of novel configurations~\cite{ozdougan2022design, siddhardha2018novela, siddhardha2018novelb}, battery and actuator technologies~\cite{xiao2023design, polonsky2021rotor}, and advanced guidance algorithms has expanded their applications across military~\cite{siddhardha2023intercepting}, research~\cite{siddhardha2019autonomous}, civilian~\cite{ghamari2022unmanned}, and commercial sectors~\cite{betti2024uav}.
With this increasing demand, it has become imperative to integrate UAV operations into Class-G airspace.
Achieving this integration requires establishing structured UAV traffic management (UTM) frameworks and defining airspace rules that account for UAVs of different capabilities, such as vertical take-off and landing (VTOL) and fixed-wing UAVs.

Significant progress has been made toward UTM through initiatives such as NASA UTM~\cite{kopardekar2014unmanned}, SESAR U-space~\cite{huttunen2019u}, CORRIDRONE~\cite{tony2020corridrone},and EURODRONE~\cite{lappas2020eurodrone}.
Additional research has addressed enabling technologies, including geofencing~\cite{bhise2022signed}, safe flight corridors~\cite{lozano2025risk}, base-station placement~\cite{lyu2016placement}, urban airspace design~\cite{yang2024review}, lane-changing strategies~\cite{nagrare2022multi}, and collision-avoidance mechanisms~\cite{aldao2022lidar}.
A core element of UTM is corridor planning, typically posed as a path-planning problem in which static airspace restrictions are treated as obstacles. 
However, most existing approaches target multirotor UAVs, whose hovering and low-speed agility simplify operations in constrained airspace. 
In contrast, fixed-wing UAVs, which are better suited for long-range missions, higher payloads, and energy-efficient flight, have received comparatively limited attention despite their operational importance.

A key challenge in fixed-wing UAV corridors is traffic congestion, as fixed-wing vehicles require a minimum forward speed to remain airborne. 
To address this, an earlier work~\cite{kedarisetty2023cooperative} introduced the concept of a loiter lane, a circular holding path connected to the main corridor through transit lanes, where UAVs can temporarily divert until main-lane congestion dissipates. 
In that study, a cooperative guidance law adjusted the speeds of all UAVs in the corridor to insert a target aircraft from the main lane into the loiter lane. 
Subsequently, the idea of equiangular loiter slots, virtual positions distributed along the loiter circle was proposed in~\cite{kedarisetty2025loiter}. 
That work developed a guaranteed insertion strategy in which an incoming UAV adjusts its speed to occupy any empty slot without disturbing loitering UAVs, under certain constraints on the main-lane-to-loiter-lane separation.
However, the works~\cite{kedarisetty2023cooperative, kedarisetty2025loiter} do not address the problem of UAV insertion considering a reduced loiter separation distance, which is essential for minimizing the width of the air corridor.

In this paper, we extend the concepts by developing a semi-cooperative loiter guidance strategy that enables reliable insertion even when the distance between the main lane and the loiter lane is reduced.
The proposed method allows loitering UAVs to vacate or ``hop” slots only when necessary. 
If an incoming UAV flying within its speed bounds can reach an available slot, no loitering UAVs change their loitering slots. 
Otherwise, a minimal number of loitering UAVs perform coordinated slot hopping to create a suitably positioned empty slot for the incoming aircraft. 
This semi-cooperative approach provides a flexible, scalable framework for fixed-wing corridor management, allowing efficient congestion resolution while minimizing disruption in the loiter lane.

The rest of the paper is arranged as follows: Section~\ref{sec:2} describes the scenario and formulates the problem.
Section~\ref{sec:3} presents the corridor design, which includes determining the loiter radius and its separation from the main lane. 
Section~\ref{sec:4} details the proposed guidance and automation algorithm for conflict-free UAV insertion into the loiter lane.
Simulation results validating the approach are provided in Section~\ref{sec:5}. Section~\ref{sec:6} concludes the paper.

\section{Scenario \& Problem Statement\label{sec:2}}

The fixed-wing UAV corridor considered in this work is illustrated in Fig.~\ref{fig:FWUAV_corridor}. 
The corridor consists of the main lane, the loiter circle, transit lanes, and transit links.
The main lane and loiter circle are connected through incoming and outgoing transit lanes, which are tangentially attached to the loiter circle to enable smooth entry. 
The entry and exit between the transit and main lanes are achieved through transit links, which are circular arcs that comply with the minimum turn-radius constraints of fixed-wing UAVs. 

\begin{figure}[h]
  \centering
  \includegraphics[width=9.5cm]{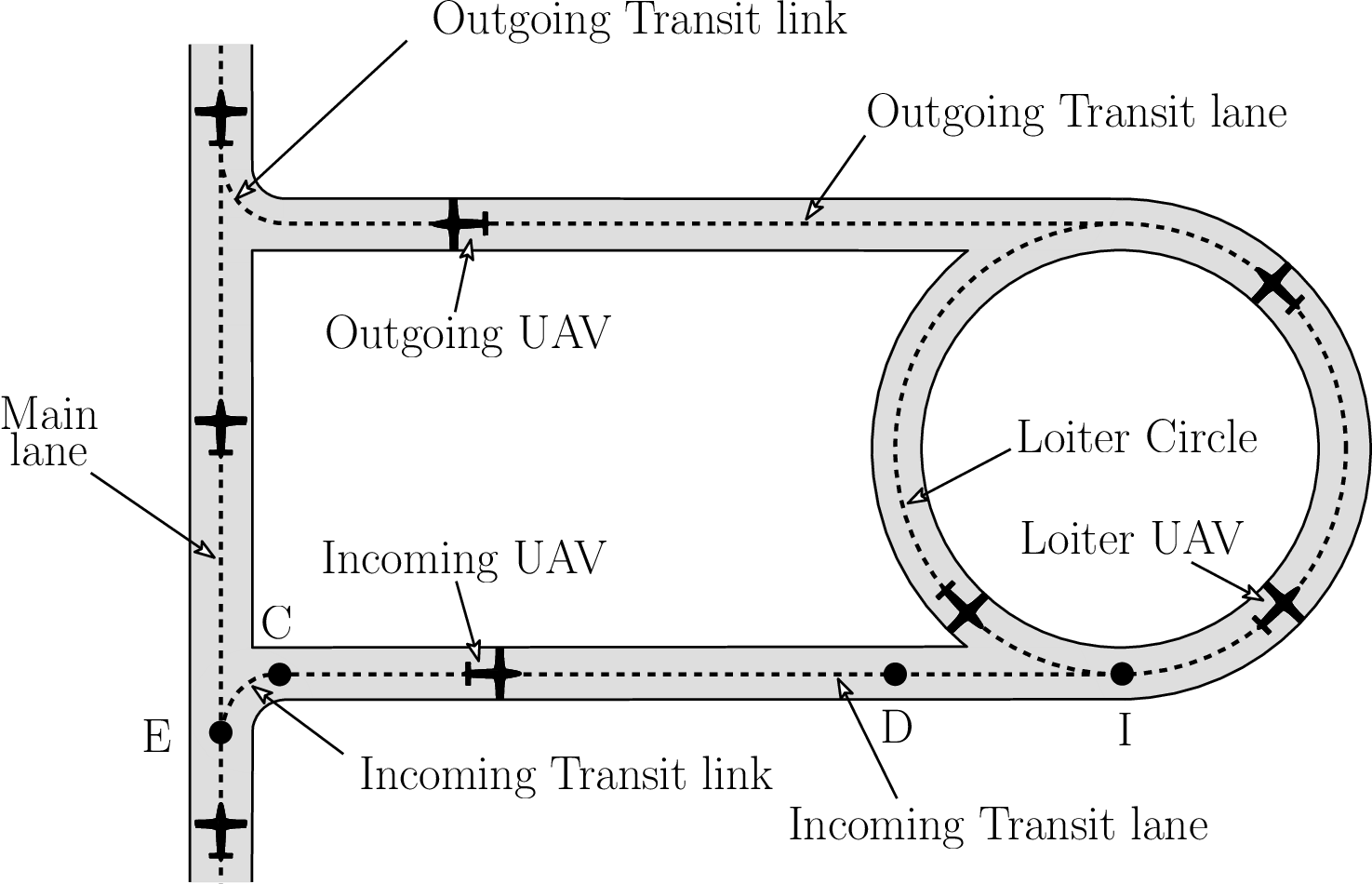}
  \caption{Fixed-wing UAV corridor}
  \label{fig:FWUAV_corridor}
\end{figure}

\begin{figure}[h]
    \centering
    \includegraphics[width=7.5cm]{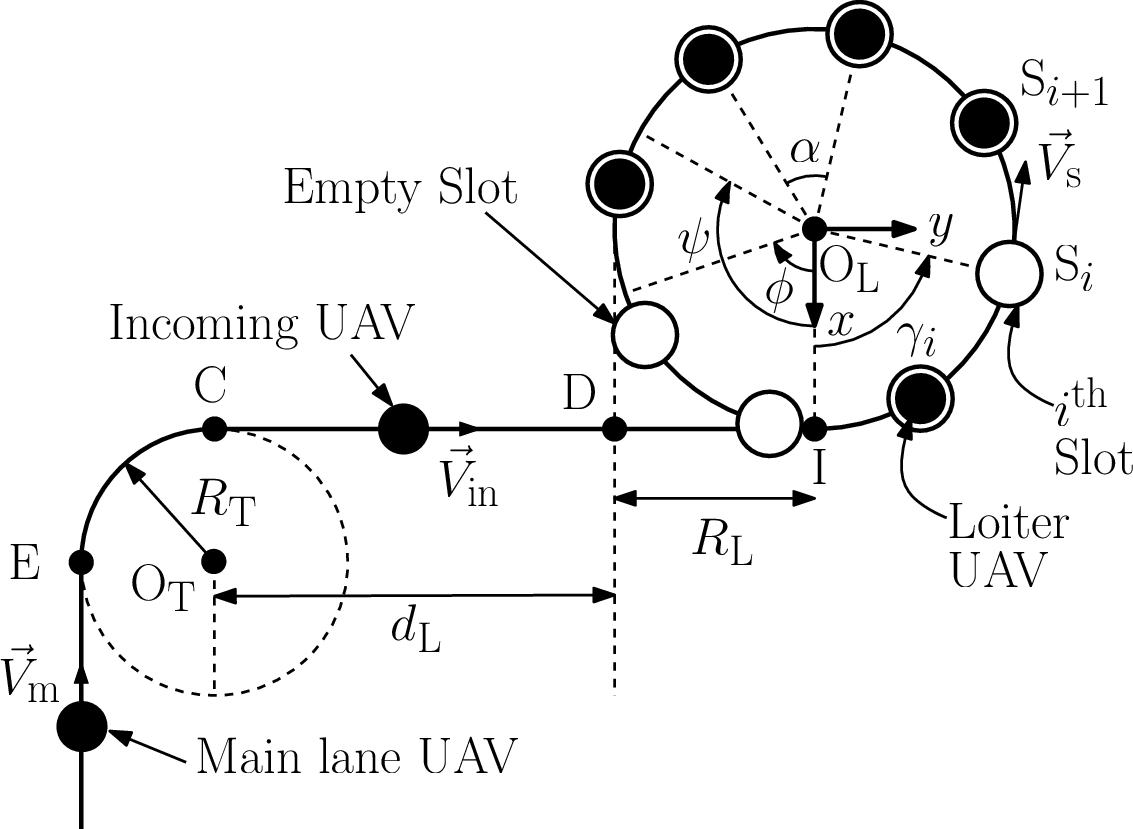}
    \caption{Corridor Geometry}
    \label{fig:corr_geo}
\end{figure}

The incoming UAV in the main lane to be inserted into the loiter lane exits at point E and enters the incoming transit link EC, as shown in Fig.~\ref{fig:FWUAV_corridor}. 
Upon reaching point C, the UAV enters the transit lane (CI). 
Throughout the transit link and transit lane, the UAV maintains a constant speed of $V\ped{in}$.
This speed is determined such that the UAV avoids collision with other loiter UAVs while entering the loiter circle at point I.
After entering the loiter circle, the UAV loiters with the speed $V\ped{s}$, which is the speed maintained by all the loiter UAVs under steady conditions.
Throughout the corridor, including the main lane, transit lanes, transit links, and loiter circle, all fixed-wing UAVs operate within the maximum and minimum speed bounds of $V\ped{min}$ and $V\ped{max}$, respectively.

The lane geometry of the proposed corridor is illustrated in Fig.~\ref{fig:corr_geo}.
The parameters $R\ped{T}$, $d\ped{L}$, and $R\ped{L}$ denote the transit link radius, separation distance between the main lane and the loiter circle, and the loiter circle radius, respectively.
The centers of the transit link arc and the loiter circle are represented by $\mathrm{O}\ped{T}$ and $\mathrm{O}\ped{L}$.
The loiter circle contains $N$ rotating equiangularly spaced slots $\mathrm{S}_i$, which are virtual positions that the incoming UAVs can occupy. 
These slots revolve with a constant speed of $V\ped{s}$.
The angular separation between consecutive slots is denoted by $\alpha$.

\textbf{Problem Statement:}  For a given number of virtual slots $N$, UAV speed bounds $\left[ V\ped{min},\,V\ped{max}\right]$, and minimum safety distance $d\ped{s}$, determine a) $R\ped{L}$ and the feasible range of $d\ped{L}$ such the an incoming UAV can occupy a virtual slot $\mathrm{S}_i$ with $\gamma_i \in \left( 0,\,2\pi \right]$, $i\in \left[1,\,N \right]$, and b) determine the loiter UAVs' and incoming UAV speeds  $V_{\mathrm{L}_i}$ and $V\ped{in}$, respectively, for a conflict-free insertion.

\section{Design of Fixed-wing UAV Corridor\label{sec:3}}

The radius $R_\mathrm{L}$ is chosen to ensure a minimum safe separation distance \( d_\mathrm{s} \) between any two adjacent slots, as well as between the incoming UAV and all slots in the loiter lane (except the slot to be occupied).  

\begin{prop}
    For $V\ped{in} \geq V\ped{s}$, the choice of the loiter circle radius
    \begin{equation}
    R_\mathrm{L}= \frac{d_\mathrm{s}}{2\mathrm{sin}^2\left(\dfrac{\pi}{N}\right)}  , \quad \text{for } N>1 \; \mathrm{and} \; V\ped{in} \geq V\ped{s} 
    \label{eq:req_RL}
\end{equation} 
accommodates N ($\geq 2$) equiangular virtual slots with a minimum safe distance $d\ped{s}$ between all UAVs.
\end{prop}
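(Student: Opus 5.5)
The plan is to reduce the claim to two separation requirements and check each against the proposed $R_\mathrm{L}$. The first is separation between slots that are both on the loiter circle. The second is separation between the incoming UAV, on the tangent transit lane CI, and every slot except the target. I expect the tangent-line geometry to give the factor $\sin^2(\pi/N)$, and the hypothesis $V_\mathrm{in}\ge V_\mathrm{s}$ to control the slots that are temporarily close to the entry point I.

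\textbf{Step 1 (slot-to-slot).} Adjacent slots are separated by $\alpha=2\pi/N$, so their distance is the chord $2R_\mathrm{L}\sin(\pi/N)$. Any other pair is farther apart, or at least as far, since the chord length is monotone in the angular gap on $[0,\pi]$. With the proposed radius, the adjacent chord equals $d_\mathrm{s}/\sin(\pi/N)$. This is at least $d_\mathrm{s}$ because $0<\sin(\pi/N)\le 1$ for $N\ge 2$.

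\textbf{Step 2 (incoming UAV versus slots far from I).} Set coordinates with I as the origin, the tangent line along the $x$-axis and the center $\mathrm{O}_\mathrm{L}$ at $(0,R_\mathrm{L})$. A slot at angle $\psi$ from I, measured from $\mathrm{O}_\mathrm{L}$, then lies at height $R_\mathrm{L}(1-\cos\psi)$ above the tangent line. Every point of CI lies on that line, so the distance from the incoming UAV to such a slot is at least $R_\mathrm{L}(1-\cos\psi)=2R_\mathrm{L}\sin^2(\psi/2)$. For every slot with $\psi\in[\alpha,2\pi-\alpha]$ this is at least $2R_\mathrm{L}\sin^2(\pi/N)=d_\mathrm{s}$, by the choice of $R_\mathrm{L}$ in the statement. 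I believe this is exactly where the formula comes from.

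\textbf{Step 3 (slots within one spacing of I; main obstacle).} It remains to handle the two neighbours of the target slot while they lie within angle $\alpha$ of I. Timing is the key. The incoming UAV, at distance $x$ before I, and the target slot, at arc length $R_\mathrm{L}\theta$ before I, arrive at I simultaneously. So $x/V_\mathrm{in}=R_\mathrm{L}\theta/V_\mathrm{s}$, and $V_\mathrm{in}\ge V_\mathrm{s}$ gives $x\ge R_\mathrm{L}\theta$.

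The slot ahead of the target then sits at angle $\alpha-\theta$ past I. I will bound its distance to the point $(-x,0)$ from below, using the distance to $(-R_\mathrm{L}\theta,0)$, and show the bound is minimized at $\theta=0$. There it equals the chord $2R_\mathrm{L}\sin(\pi/N)\ge d_\mathrm{s}$. Alternatively, when $\alpha-\theta$ is not small, the height bound $R_\mathrm{L}(1-\cos(\alpha-\theta))$ together with the horizontal offset $x+R_\mathrm{L}\sin(\alpha-\theta)$ should suffice.

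The slot behind the target is behind the target on the circle, at angle $\theta+\alpha$ before I. It stays at least as far from the line segment as the target does, and the target itself never comes within $d_\mathrm{s}$ of the incoming UAV's path except at the rendezvous. After the UAV enters at I, all vehicles move on the circle at $V_\mathrm{s}$, so Step 1 applies.

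Verifying monotonicity in $\theta$ in this last step is where I expect the real work to be. If a clean monotonicity argument fails, I would fall back on showing that the squared distance is a sum of two nonnegative terms, each nondecreasing in $x$ for $x\ge R_\mathrm{L}\theta$.
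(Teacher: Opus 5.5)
The paper gives no argument for this proposition; it defers to the authors' earlier work. Your proof therefore has to stand on its own.

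Steps~1 and~2 are correct. Step~2 is indeed where $d_\mathrm{s}=2R_\mathrm{L}\sin^2(\pi/N)$ comes from, and $x\ge R_\mathrm{L}\theta$ is the right use of $V_\mathrm{in}\ge V_\mathrm{s}$.

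The gap is the reduction that opens Step~3: it is false that only the two neighbours of the target need separate treatment. While the UAV is on CI, the target's remaining angle $\theta$ falls from $(V_\mathrm{s}/V_\mathrm{in})(1+d_\mathrm{L}/R_\mathrm{L})$ at C to $0$ at I. That starting value is typically several slot spacings; for the Case~1 parameters with $V_\mathrm{in}=V_\mathrm{s}$ it is $2.75$~rad, about $3.5\alpha$. So every slot leading the target by $j\alpha$ with $2\le j\le N-2$ (these exist once $N\ge4$) also sweeps through the arc within $\alpha$ of I during the transit. There its height above the lane is below $d_\mathrm{s}$, so Step~2 does not apply, and Step~3 never mentions it. This is exactly where the speed hypothesis matters: when slot $j$ passes I, the UAV is only $(V_\mathrm{in}/V_\mathrm{s})\,jR_\mathrm{L}\alpha$ from I, which can be below $d_\mathrm{s}$ if $V_\mathrm{in}\ll V_\mathrm{s}$.

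Your trailing-neighbour argument is also wrong as written. The target's height $R_\mathrm{L}(1-\cos\theta)$ is below $d_\mathrm{s}=R_\mathrm{L}(1-\cos\alpha)$ for every $\theta<\alpha$. So the target does come within $d_\mathrm{s}$ of the lane well before rendezvous, and \emph{at least as far as the target} proves nothing. For $\theta\le2\pi-2\alpha$ that slot is covered by Step~2 anyway. For larger $\theta$ it lies within $\alpha$ of I and needs the same treatment as the others.

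The repair is your own leading-neighbour computation, applied to every slot. Index each non-target slot by its lead $j\alpha$ over the target, $1\le j\le N-1$. Let $\gamma=j\alpha-\theta$ be its signed angle past I, negative while it is still approaching. Only slots with $|\gamma|<\alpha$ escape Step~2. For these, $x\ge R_\mathrm{L}\theta$ gives a horizontal gap $x+R_\mathrm{L}\sin\gamma\ge R_\mathrm{L}(j\alpha-\gamma+\sin\gamma)\ge R_\mathrm{L}(\alpha-\gamma+\sin\gamma)\ge0$. The squared distance is therefore at least $R_\mathrm{L}^2g(\gamma)$, with $g(\gamma)=(\alpha-\gamma+\sin\gamma)^2+(1-\cos\gamma)^2$. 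Since $g'(\gamma)=2(1-\cos\gamma)(\gamma-\alpha)\le0$ for $\gamma\le\alpha$, we get $g(\gamma)\ge g(\alpha)=4\sin^2(\pi/N)$. Hence the distance is at least the chord $2R_\mathrm{L}\sin(\pi/N)\ge d_\mathrm{s}$. This settles the monotonicity you flagged as the main obstacle. The same computation covers the approach and departure phases of both neighbours and of all non-adjacent slots.
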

\begin{proof}
    The proof for this proposition is provided in detail in our previous work \cite{kedarisetty2025loiter}.
\end{proof}

The design of the corridor involves determining the loiter circle separation from the main lane such that there exists a $V\ped{in} \in \left[ V\ped{min},\,V\ped{max} \right]$ for rendezvous with any empty loiter slot.

\begin{prop}
For a given $R_\mathrm{L}$, $N$, $V\ped{min}$, $V\ped{max}$, $R_\mathrm{T}$,
the minimum required value of $d_\mathrm{L}$ is given by
\begin{equation}
d\ped{L_{min}} = 
\begin{dcases}
    \begin{matrix}
        \left(\dfrac{2\pi R_\mathrm{L}}{N}\right) \left(\dfrac{k_\mathrm{v}}{k_\mathrm{v}-1}\right) - \dfrac{\pi R_\mathrm{T}}{2} -R_\mathrm{L}, \quad  \mathrm{if}\; & \left(\dfrac{2\pi R_\mathrm{L}}{N}\right) \left(\dfrac{k_\mathrm{v}}{k_\mathrm{v}-1}\right) \geq  \dfrac{\pi R_\mathrm{T}}{2} + R_\mathrm{L} \\
         &   \\
        0, \quad & \mathrm{otherwise}
    \end{matrix}
\end{dcases}
\label{eq:min_dl}
\end{equation}
where $k\ped{v} = \dfrac{V\ped{max}}{V\ped{min}}$
\end{prop}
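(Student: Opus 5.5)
The plan is to reduce insertion feasibility to a timing condition at the entry point I, and then read off $d_\mathrm{L}$ from that condition. The first step is to compute the path length $L$ that the incoming UAV flies from E to I using the geometry of Fig.~\ref{fig:corr_geo}. The transit link EC is a quarter-circle of radius $R_\mathrm{T}$, so its length is $\pi R_\mathrm{T}/2$. The transit lane CI is tangent to the loiter circle at I and runs perpendicular to the main lane, so I expect its length to be $d_\mathrm{L}+R_\mathrm{L}$. This gives
\begin{equation*}
L = \frac{\pi R_\mathrm{T}}{2} + R_\mathrm{L} + d_\mathrm{L}.
\end{equation*}
I would confirm this length from the figure's construction of $\mathrm{O}_\mathrm{T}$ and $\mathrm{O}_\mathrm{L}$; it is the one geometric input the rest of the argument depends on.

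Next I would describe which slots the UAV can reach. Since $V_\mathrm{in}$ is constant on EC and CI, the arrival time at I is $t = L/V_\mathrm{in}$. As $V_\mathrm{in}$ ranges over $[V_\mathrm{min},V_\mathrm{max}]$, the achievable arrival times fill the interval $[L/V_\mathrm{max},\,L/V_\mathrm{min}]$. The slots are equally spaced, so a slot passes I once every $T_\mathrm{s} = 2\pi R_\mathrm{L}/(N V_\mathrm{s})$. Rendezvous with a prescribed slot $\mathrm{S}_i$, for any initial phase $\gamma_i\in(0,2\pi]$, is therefore guaranteed exactly when the arrival window is at least one full slot period (and, for an arbitrary phase over all slots, the same argument applies with the per-slot spacing):
\begin{equation*}
\frac{L}{V_\mathrm{min}} - \frac{L}{V_\mathrm{max}} \;\geq\; \frac{2\pi R_\mathrm{L}}{N V_\mathrm{s}}.
\end{equation*}

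The step I expect to be the main obstacle is removing $V_\mathrm{s}$ from this condition, since it does not appear in \eqref{eq:min_dl}. My approach is to take the worst case over admissible loiter speeds. Because $V_\mathrm{s}\geq V_\mathrm{min}$, the period $T_\mathrm{s}$ is largest when $V_\mathrm{s}=V_\mathrm{min}$. Requiring the condition at that worst case makes it hold for every admissible $V_\mathrm{s}$, which gives
\begin{equation*}
L\left(1-\frac{1}{k_\mathrm{v}}\right) \geq \frac{2\pi R_\mathrm{L}}{N}
\quad\Longleftrightarrow\quad
L \geq \frac{2\pi R_\mathrm{L}}{N}\,\frac{k_\mathrm{v}}{k_\mathrm{v}-1}.
\end{equation*}
I would also argue that the bound is tight, i.e.\ necessary, when $V_\mathrm{s}$ is not fixed in advance. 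For a shorter $L$, some initial slot phase leaves a gap of length $T_\mathrm{s}$ that the arrival window cannot cover. The case $V_\mathrm{in}\ge V_\mathrm{s}$ from the preceding proposition, which fixes $R_\mathrm{L}$, is compatible with this choice.

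Finally, I would substitute the expression for $L$ and solve for $d_\mathrm{L}$, obtaining $d_\mathrm{L}\geq \frac{2\pi R_\mathrm{L}}{N}\frac{k_\mathrm{v}}{k_\mathrm{v}-1} - \frac{\pi R_\mathrm{T}}{2} - R_\mathrm{L}$. When this right-hand side is nonnegative, it is the minimum $d_\mathrm{L_{min}}$. When it is negative, the fixed path contributions $\pi R_\mathrm{T}/2 + R_\mathrm{L}$ already satisfy the condition, so the only remaining constraint is $d_\mathrm{L}\geq 0$ and $d_\mathrm{L_{min}}=0$. These two cases give the piecewise form in \eqref{eq:min_dl}.
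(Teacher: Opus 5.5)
Your proposal is correct and follows essentially the paper's argument. The paper states the same window condition in angular form, $2\pi/N \le \psi-\phi$ with $\psi = D_\mathrm{L}/R_\mathrm{L}$ and $\phi = D_\mathrm{L}/(k_\mathrm{v}R_\mathrm{L})$, which tacitly sets the slot speed to $V_\mathrm{min}$. You instead work with arrival times, justify $V_\mathrm{s}=V_\mathrm{min}$ explicitly as the worst case, and then rearrange and clip at $d_\mathrm{L}\ge 0$ as the paper does.

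One wording fix: your displayed condition guarantees rendezvous with \emph{some} slot, since slot passages at I are $T_\mathrm{s}$ apart. It does not guarantee rendezvous with a prescribed $\mathrm{S}_i$, which returns to I only every $2\pi R_\mathrm{L}/V_\mathrm{s}$. ``Some slot'' is exactly the paper's requirement, because occupied feasible slots are then handled by slot hopping.
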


\begin{proof} 
The minimum angular displacement required for the slot from the insertion point $I$ for successful insertion when the incoming UAV approaches with velocity $V_{\max}$ is denoted by $\phi$, as shown in Fig.~\ref{fig:corr_geo}. 
Similarly, the maximum angular displacement that the slot can traverse from the insertion point $I$ when the incoming UAV approaches with velocity $V_{\min}$ is denoted by $\psi$.

\medskip

The necessary condition for the existence of at least one slot in the loiter circle at any given time is
\begin{equation}
\frac{2\pi}{N} \leq \psi - \phi
\label{eq:condition}
\end{equation}
where
\[
\phi = \left(d_\mathrm{L} + R_\mathrm{L} + \frac{\pi R_\mathrm{T}}{2}\right)\frac{1}{R_\mathrm{L} k_\mathrm{v}},
\qquad 
\psi = \frac{1}{R_\mathrm{L}}\left(d_\mathrm{L} + R_\mathrm{L} + \frac{\pi R_\mathrm{T}}{2}\right).
\]

\medskip

By substituting $\phi$ and $\psi$ into \eqref{eq:condition} we obtain
\[
    \frac{2\pi}{N} \leq \frac{1}{R_\mathrm{L}}\left (d_\mathrm{L}+R_\mathrm{L}+ \frac{\pi R_\mathrm{T}}{2}\right) \left({1-\frac{1}{k_\mathrm{v}}}\right)
\]

From the inequality by rearranging the terms ,we can find minimum $d_\mathrm{L}$
\[
d_\mathrm{L}  \geq \left(\frac{2\pi R_\mathrm{L}}{N}\right) \left(\frac{k_\mathrm{v}}{k_\mathrm{v}-1}\right) - \frac{\pi R_\mathrm{T}}{2} -R_\mathrm{L}
\]

The length of $d_\mathrm{L}$ is always non-negative, i.e., $d_\mathrm{L} \ge 0$.
\end{proof}

\section{Guidance \& Automation Algorithm\label{sec:4}}

The guidance algorithm is designed to accommodate an incoming UAV in some reachable slot, regardless of whether that slot is currently occupied. 
If the required slot is already occupied, the UAV currently in that slot, along with any other adjacent loiter UAVs, will move to the nearest empty slot, thereby creating a vacancy for the incoming UAV.
Towards this, the first step is to identify all the feasible slots (occupied or unoccupied) that the incoming UAV can rendezvous with at point I.
For the incoming UAV in the main lane to be inserted into the loiter circle, it will travel along the transit link and transit lane (ECI), covering a distance $D_{\mathrm{L}}$ before reaching the loiter circle. 
\begin{equation}
    D_{\mathrm{L}}=\frac{\pi R_{\mathrm{T}}}{2} + d_{\mathrm{L}} +R_{\mathrm{L}}
\end{equation}
The time taken for the incoming UAV to reach the point of insertion I is
\begin{equation}
    t\ped{in} = \dfrac{D\ped{L}}{V\ped{in}}
\end{equation}

\noindent  The maximum and minimum time required for the incoming UAV to travel from point B to point I are
\begin{equation}
    t\ped{min} = \dfrac{D\ped{L}}{V\ped{max}},\; \; t\ped{max} = \dfrac{D\ped{L}}{V\ped{min}}
\end{equation}
The time needed by $i\ap{th}$ slot to reach the insertion point I is 
\begin{equation}
    t_i =\frac{(2 \pi - \gamma_i)R_\mathrm{L}}{V_\mathrm{s}} 
    \label{eq:delta_a}
\end{equation}
The necessary condition for successful incoming UAV insertion is for some $\mathrm{S}_i$, $i\in \left[1,\,N \right]$ to satisfy the following condition
\begin{equation}
    t\ped{min} \leq t_i \leq t\ped{max}
\end{equation}
All virtual slots that satisfy this feasibility condition constitute the set $\mathrm{S}\ped{f}$, and the corresponding set of feasible insertion times is denoted by $t\ped{f}$.
The set $\mathrm{S}\ped{f}$ is the union of two disjoint subsets $\mathrm{S}\ped{uf}$ and $\mathrm{S}\ped{of}$, which represent the feasible unoccupied and feasible occupied slots, respectively.
The corresponding times at which these slots reach point I are denoted by $t\ped{uf}$ and $t\ped{of}$. The set of empty slots in the loiter circle is defined as $\mathrm{S}\ped{e}$ and the corresponding set of times at which these slots reach point I are  $t\ped{e}$.
If the set $\mathrm{S}\ped{uf}$ is non-empty, the first slot in this set is selected as the desired slot for insertion.
However, if this set is empty, the nearest empty slot in the loiter circle relative to the feasible occupied slots is chosen so that the minimum number of loiter UAVs need to hop their positions.
The algorithm for determining the desired slot for insertion, along with the loiter UAVs that must change their positions, is described in Algorithm~\ref{alg:del_t}.

Upon determining the desired slot for insertion, the incoming UAV speed is determined as 
\begin{equation}
    V\ped{in} = \dfrac{D\ped{L}}{t\ped{in}}, \quad t\ped{in} = \dfrac{\left(2\pi - \gamma\ped{d} \right) R\ped{L}}{V\ped{s}}
\end{equation}
where $\gamma\ped{d}$ is the angular position of the desired slot.
The selected loiter UAVs will hop one slot ahead to free a slot with maximum speed and empty out the desired slot for insertion. 
The time required for this slot to shift is
\begin{equation}
    t\ped{hop} = \dfrac{2\pi R\ped{L}}{N \left( V\ped{max} -V\ped{s} \right)}
\end{equation}
For successful insertion $t_\mathrm{hop} < t_\mathrm{min}$. This condition is satisfied by taking the minimum $d_\mathrm{L}$ from Eq.~\eqref{eq:min_dl}. 
The guidance algorithm that enables the switching on and off of the loiter slot hopping, along with determining the incoming UAV speeds, is provided in Algorithm ~\ref{alg:hopping}. 

\begin{algorithm}[H]
\caption{Algorithm to determine the intersection time of incoming UAV and the slots needs to be hopped}
\label{alg:del_t}
\SetKwInOut{KwIn}{Input}
\SetKwInOut{KwOut}{Output}
\SetKwInOut{Procedure}{Procedure}
\SetKwInOut{Initialize}{Initialize}

\KwIn{$N$,\,$t\ped{uf}$,\,$t\ped{of}$,\,${t}\ped{e}$,\,$\mathrm{S}\ped{e}$}
\KwOut{hopSlots,$t\ped{in}$} 
\Initialize{$t\ped{in}={none}$, hopSlots$=[]$}
\Procedure{}
\For{$j=1$ \textbf{to} $\mathrm{length}$($t\ped{uf}$)}{
    \If{($t\ped{uf}(j)>t\ped{min}$ $\&\&$ $t\ped{uf}(j)<t\ped{max}$)}
    {
     $t\ped{in}= t\ped{uf}(j)$ \;
     hopSlots$=[]$\;
     \textbf{return} $t\ped{in}$, hopSlots\;
    }
}
\If{$t_{in}=$none}
{   
        \For{$k=1$ \textbf{to} $N-1$}{

        \For{$j=1$ \textbf{to} $\mathrm{length}(\mathrm{t}\ped{e})$}{
            \If{$(t_{a-k}>t_\mathrm{min} \, \, \&\& \, \, t_{a-k}<t_\mathrm{max})$} 
            {
             $t_\mathrm{in}= t_{a-k}$\; 
             hopSlots=[$\mathrm{S}\ped{e}(j)-1$, $\mathrm{S}\ped{e}(j)-2$,...,$\mathrm{S}\ped{e}(j)-k$ ]\; 
             \textbf{return} $t_\mathrm{in}$, hopSlots\; 
            }
        }
    }
}
\Return{$t_{in}$, hopSlots} \\
\end{algorithm}

\begin{algorithm}[H]
\caption{Guidance algorithm for calculating UAV speeds}
\label{alg:hopping}
\SetKwInOut{KwIn}{Input}
\SetKwInOut{KwOut}{Output}
\SetKwInOut{Procedure}{Procedure}
\SetKwInOut{Initialize}{Initialize}

\KwIn{$N$, hopSlots,\, $t\ped{in}$,\, $t\ped{hop}$,\, $D\ped{L}$,\, $t\ped{count}$}
\KwOut{$V\ped{in}$,\,$V_{\mathrm{L}_i}$} 

\Procedure{}
 $V\ped{in}=\frac{D\ped{L}}{t\ped{in}}$ \;
\For{$i = 1$ \KwTo $N$}{
    \If{$\mathrm{isEmpty}$ {hopSlots}}{
     $V_{{L}_i}=V\ped{min}$\;
    }
    \Else{
        \If{$i \ \mathrm{in} \ \mathrm{hopSlots} \ \&\& \ (t\ped{count} \leq t\ped{hop}) $} {
         $V_{\mathrm{L}_i} = V\ped{max}$\; } 
        \Else{
         $V_{\mathrm{L}_i} = V\ped{min}$\;
        }
    }
}
\Return{$V_{\mathrm{L}_i}$ and $V\ped{in}$}\\
\end{algorithm}

\section{Simulation results\label{sec:5}}

The proposed loiter lane design and guidance algorithm are validated through simulations using the following UAV non-linear kinematic model
\begin{equation}
\label{eq:uav_model}
\begin{aligned}
\dot{x}& = V \cos\theta, \;
\dot{y} = V \sin\theta, \;
\dot{\theta} = \frac{a}{V}
\end{aligned}
\end{equation}
Here, $(x, y)$ denote the position of the UAV, and $\theta$ represents the flight path angle measured from the x-axis. 
$V$ and $a$ represent the velocity and lateral acceleration of the UAVs as provided by the guidance algorithm. 
We conduct two simulation cases: a) The set $\mathrm{S}\ped{uf}$ is non-empty (presence of an unoccupied slot in the feasible loiter slots) and b) The set $\mathrm{S}\ped{uf}$ is empty.
The parameters chosen for the simulation are listed in Table~\ref{table:params}.

\begin{table}[h]
\centering
\caption{Simulation Parameters}
\begin{tabular}{c c c c c c c c c}
\hline
\hline
Case & $V_{\min}$ & $V_{\max}$ & $R\ped{L}$ & $R\ped{T}$ & $d\ped{L}$ & $N$ & $d\ped{s}$ \\
\hline
1 & 15.0\,m/s & 35.0\,m/s & 200.0\,m & 80.0\,m  & 350.0\,m & 8 & 58.5\,m\\
2 & 15.0\,m/s & 35.0\,m/s & 100.0\,m & 80.0\,m  & 300.0\,m & 6 & 50.0\,m\\
\hline
\hline
\end{tabular}
\label{table:params}
\end{table}

\begin{figure}[H]
    \centering
    \includegraphics[width=17.5cm]{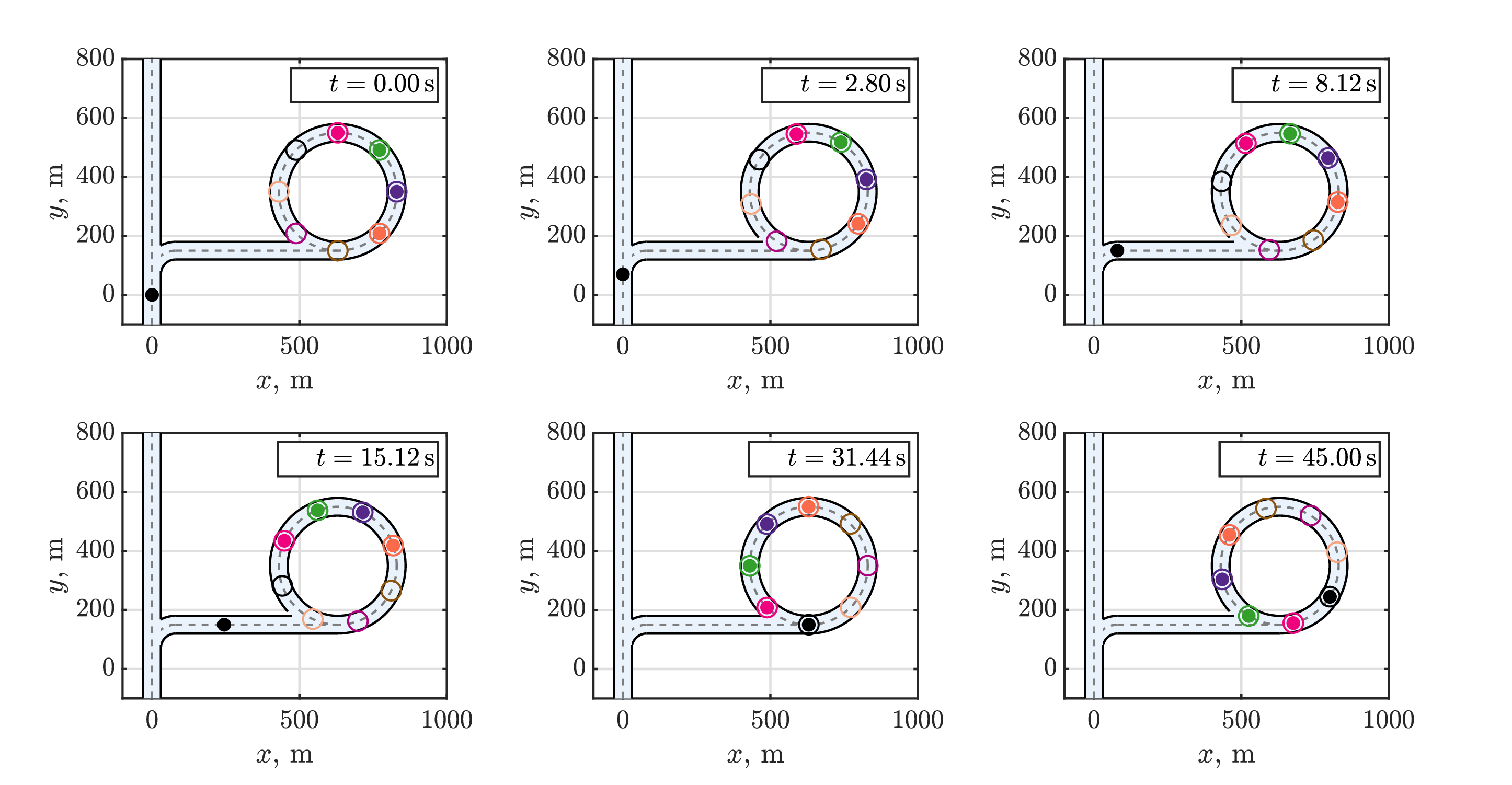}
    \caption{Case-1 Trajectory Plots}
    \label{fig:sim1a}
\end{figure}

Figure~\ref{fig:sim1a} illustrates the first simulation scenario with the sequence of events where a single incoming UAV (shown in black) is inserted into the loiter circle. 
At \(t = 0\,\mathrm{s}\), the incoming UAV passes along the main lane at \((0,0)\). At \(t = 2.8\,\mathrm{s}\), it enters the transit path. 
At this moment, the desired velocity of the incoming UAV is computed, and an appropriate empty slot is selected. 
By \(t = 31.44\,\mathrm{s}\), the UAV reaches the chosen slot at the insertion point, and continues to move along the occupied loiter slot as shown at \(t = 45\,\mathrm{s}\).
In this scenario, there are no slot position changes in the loiter circle as there exists an empty feasible loiter slot for the incoming UAV to occupy.
The velocity, lateral acceleration, flight angle, and 
$d_\mathrm{sep}$  profiles of the incoming UAV are presented in Fig.~\ref{fig:sim1b}. Here, $d_\mathrm{sep}$ denotes the distance between the incoming UAV and the desired slot.
This distance is evaluated only after the UAV exits the main lane, since the slot is assigned while entering the transit link.
\\

\begin{figure}[H]
    \centering
    \includegraphics[width=17cm]{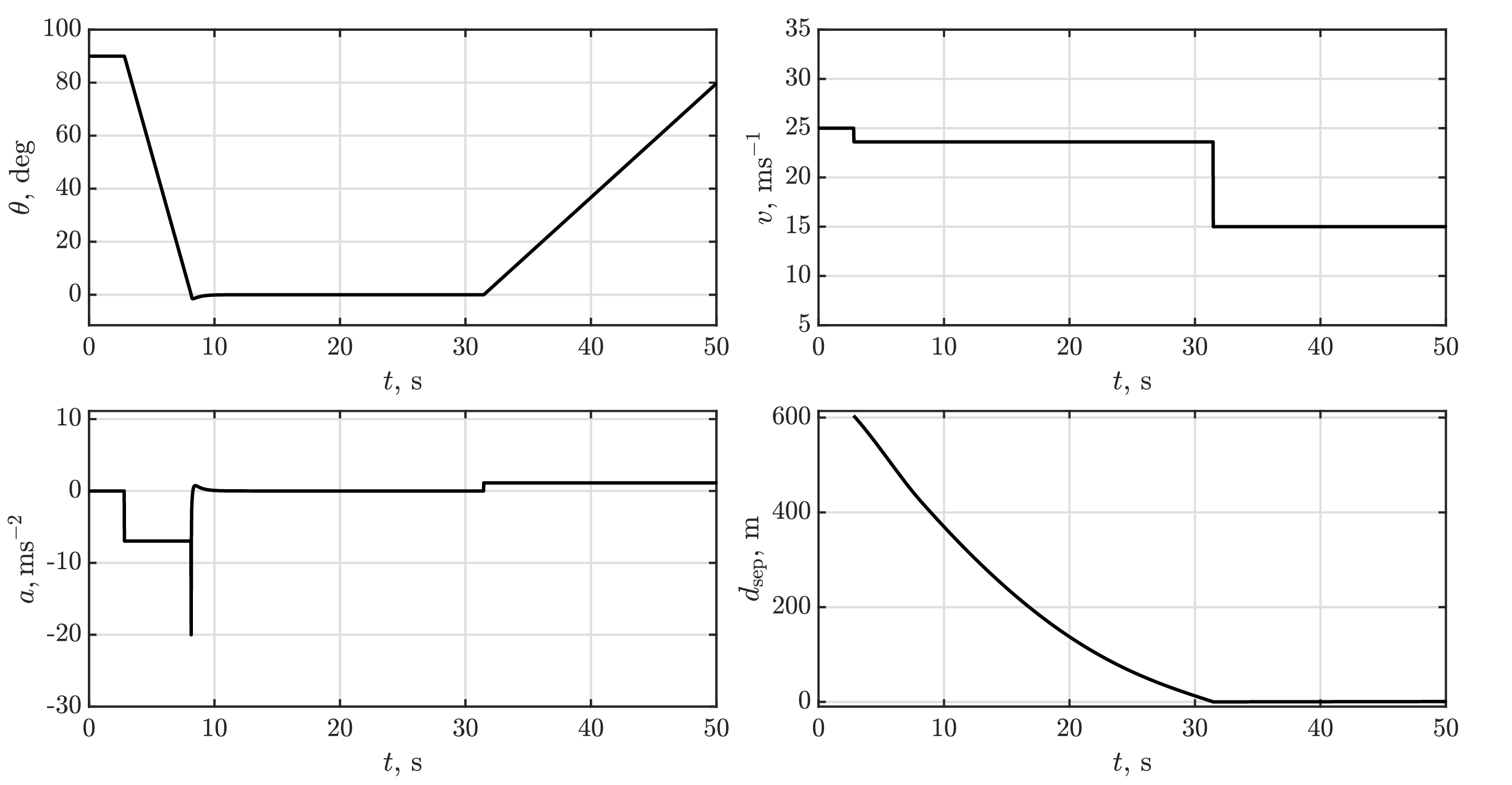}
    \caption{Case-1 Analysis plot}
    \label{fig:sim1b}
\end{figure}

\begin{figure}[H]
    \centering
    \includegraphics[width=17.5cm]{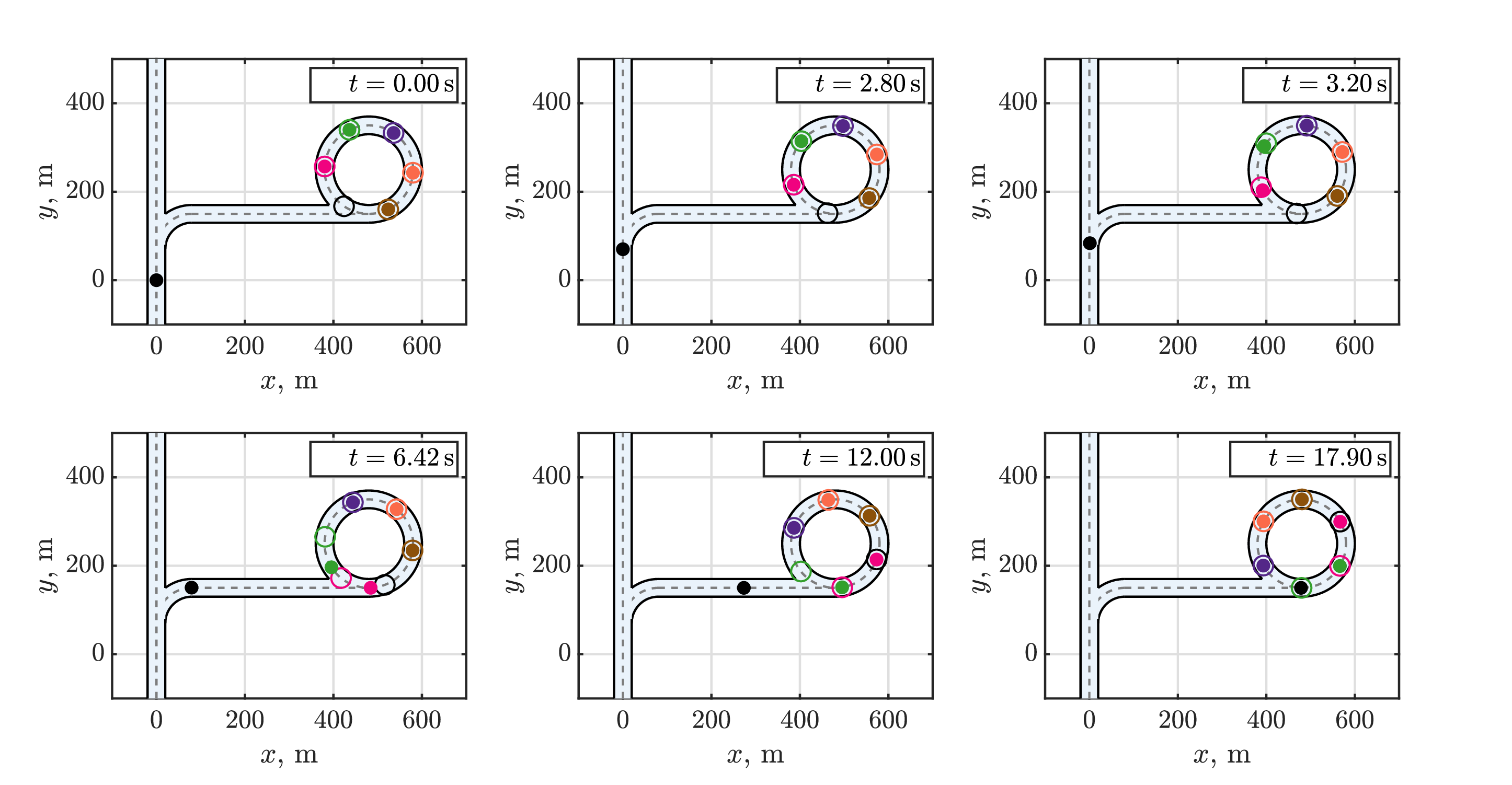}
    \caption{Case-2 Trajectory Plots}
    \label{fig:sim2a}
\end{figure}
 
The second simulation presents the scenario where the incoming UAV gets inserted into the loiter lane with the cooperation of the loiter UAVs, as shown in Fig.~\ref{fig:sim2a}. 
At \(t = 0.0\,\mathrm{s}\), the incoming UAV passes along the main lane lane at (0,0). 
By \(t = 2.8\,\mathrm{s}\), it enters the transit lane, the slot selection algorithm and begins the search for a feasible slot. 
Finding no immediate availability of feasible slot, it identifies an occupied yet feasible one and initiates the guidance algorithm to empty that slot for incoming UAV insertion.
The algorithm chooses the slot such that the minimum number of loiter UAVs shift. 
At \(t = 3.2\,\mathrm{s}\), the UAVs occupying the feasible slots already started to leave their prior slot positions at their maximum allowable velocities. 
The slot shifting is completed before the incoming UAV reaches I, as can be seen at \(t = 6.42\,\mathrm{s}\), and by \(t = 17.9\,\mathrm{s}\), the incoming UAV successfully enters the loiter lane. 
This scenario's analysis plots are presented in Fig.~\ref{fig:sim2b}.

\begin{figure}[H]
    \centering
    \includegraphics[width=17.0cm]{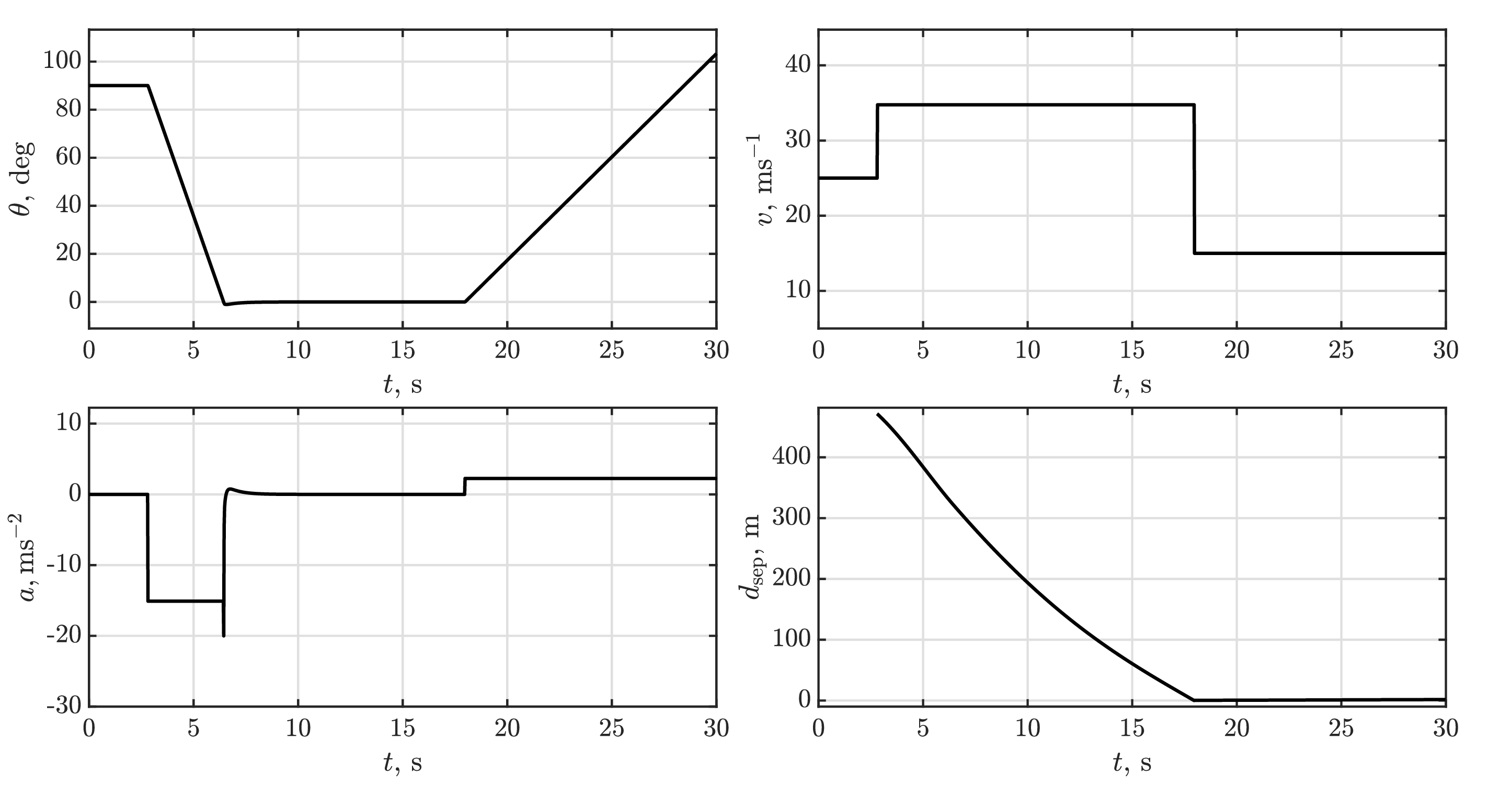}
    \caption{Case-2 Analysis Plots}
    \label{fig:sim2b}
\end{figure}

\section{Conclusion\label{sec:6}}
This paper introduces a semi-cooperative guidance algorithm for UAV insertion into the loiter circle, reducing the transit lane separation distance. 
The developed guidance algorithm safely diverts UAVs from the main lane and inserts them into the loiter lane without conflicts. 
The algorithm determines commands for both the incoming UAV and the loitering UAVs. The effectiveness of the guidance strategy is validated through simulation results.

\bibliography{sample}

\end{document}